\relax
\documentclass[letterpaper]{article} 
\usepackage{aaai21}  
\usepackage{times}  
\usepackage{helvet} 
\usepackage{courier}  
\usepackage[hyphens]{url}  
\usepackage{graphicx} 
\urlstyle{rm} 
\usepackage{natbib}  
\usepackage{caption} 
\frenchspacing  
\setlength{\pdfpagewidth}{8.5in}  
\setlength{\pdfpageheight}{11in}  
\pdfinfo{
/Title (Unsupervised Learning of Graph Hierarchical Abstractions with Differentiable Coarsening and Optimal Transport)
/Author (Tengfei Ma, Jie Chen)
/TemplateVersion (2021.2)
} 

\setcounter{secnumdepth}{0} 

%



\usepackage{amsmath,amssymb,amsthm,bbm,mathtools}
\usepackage{color}
\usepackage{graphicx,subfigure}
\usepackage{algorithm,algpseudocode}
\usepackage{multirow}
\usepackage{footnote}
\usepackage{enumitem}

\graphicspath{{figs/}}

\newcommand{\real}{\mathbb{R}}

\newcommand{\ones}{\mathbf{1}}

\newcommand{\dd}{\textsc{DD}}
\newcommand{\imdbb}{\textsc{IMDB-B}}
\newcommand{\imdbm}{\textsc{IMDB-M}}
\newcommand{\imdbbi}{\textsc{IMDB-BINARY}}
\newcommand{\imdbmu}{\textsc{IMDB-MULTI}}

\newcommand{\proteins}{\textsc{PROTEINS}}

\newcommand{\ncii}{\textsc{NCI109}}
\newcommand{\mutag}{\textsc{MUTAG}}

\newcommand{\graphsage}{\textsc{GraphSage}}

\newcommand{\setset}{\textsc{Set2Set}}
\newcommand{\sortpool}{\textsc{SortPool}}

\newcommand{\diffpool}{\textsc{DiffPool}}

\newcommand{\gcn}{\textsc{GCN}}
\newcommand{\gunet}{\textsc{Graph U-Net}}

\newcommand{\sagpool}{\textsc{SAGPool}}
\newcommand{\gpool}{\textsc{gPool}}
\newcommand{\gae}{\textsc{GraphAE-Unsupv}}
\newcommand{\otc}{\textsc{OTCoarsening}}
\newcommand{\otcs}{\textsc{OTCoarsening-Sup}}

\DeclareMathOperator{\sigmoid}{sigmoid}

\DeclareMathOperator{\diag}{diag}
\DeclareMathOperator{\gnn}{GNN}

\theoremstyle{definition} 
\theoremstyle{remark}     
\theoremstyle{remark}     
\theoremstyle{plain}      \newtheorem{theorem}{Theorem}
\theoremstyle{plain}      
\theoremstyle{plain}      
\theoremstyle{plain}      \newtheorem{corollary}[theorem]{Corollary}
\theoremstyle{plain}      

\title{Unsupervised Learning of Graph Hierarchical Abstractions \\
with Differentiable Coarsening and Optimal Transport}

%

\author{%
  Tengfei Ma\thanks{These two authors contribute equally.} \qquad
  Jie Chen\footnotemark[1] \\
}
\affiliations{%
  MIT-IBM Watson AI Lab, IBM Research \\
  Tengfei.Ma1@ibm.com \qquad chenjie@us.ibm.com
}

\begin{document}

\maketitle

\begin{abstract}
  Hierarchical abstractions are a methodology for solving large-scale graph problems in various disciplines. Coarsening is one such approach: it generates a pyramid of graphs whereby the one in the next level is a structural summary of the prior one. With a long history in scientific computing, many coarsening strategies were developed based on mathematically driven heuristics. Recently, resurgent interests exist in deep learning to design hierarchical methods learnable through differentiable parameterization. These approaches are paired with downstream tasks for supervised learning. In practice, however, supervised signals (e.g., labels) are scarce and are often laborious to obtain. In this work, we propose an unsupervised approach, coined \textsc{OTCoarsening}, with the use of optimal transport. Both the coarsening matrix and the transport cost matrix are parameterized, so that an optimal coarsening strategy can be learned and tailored for a given set of graphs without use of labels. We demonstrate that the proposed approach produces meaningful coarse graphs and yields competitive performance compared with supervised methods for graph classification and regression.
\end{abstract}

\section{Introduction}
A proliferation of graph neural networks~\citep{Bruna2014,Henaff2015,Duvenaud2015,Defferrard2016,Kipf2017,Hamilton2017,Chen2018,Velickovic2018,Ying2018a,Liao2019,Xu2019,Scarselli2009,Li2016,Gilmer2017,Jin2017} emerged recently with wide spread applications ranging from theorem proving~\citep{Wang2017}, chemoinformatics~\citep{Jin2017,Fout2017,Schuett2017}, to planning~\cite{Ma2020}. These models learn sophisticated feature representations of a graph and its constituents (i.e., nodes and edges) through layers of feature transformation. Several architectures~\citep{Xu2019,Morris2019,Maron2019} are connected to the Weisfeiler--Lehman (WL) graph isomorphism test~\citep{Shervashidze2011} because of the resemblance in iterative node (re)labeling.

An image analog of graph neural networks is convolutional neural networks, whose key components are convolution and pooling. The pooling operation reduces the spatial dimensions of an image and forms a hierarchical abstraction through successive downsampling. For graphs, a similar hierarchical abstraction is particularly important for maintaining the structural information and deriving a faithful feature representation. A challenge, however, is that unlike image pixels that are spatially regular, graph nodes are irregularly connected and hence pooling is less straightforward.

Several graph neural networks perform pooling in a hierarchical manner. The work of~\citet{Bruna2014} builds a multiresolution hierarchy of the graph with agglomerative clustering, based on $\epsilon$-covering. The work of~\citet{Defferrard2016} and~\citet{Fey2018} employ Graclus that successively coarsens a graph based on the heavy-edge matching heuristic. The work of~\citet{Simonovsky2017} constructs the hierarchy through a combined use of spectral polarity and Kron reduction. These neural networks build the graph hierarchy as preprocessing, which defines in advance how pooling is performed given a graph. No learnable parameters are attached.

Recently, hierarchical abstractions as a learnable neural network module surfaced in graph representation learning. Representative approaches include \diffpool~\citep{Ying2018}, \gunet~\citep{Gao2019}, and~\sagpool~\citep{Lee2019}. All approaches treat the learnable hierarchy as part of the neural network (in conjunction with a predictive model), which is trained with a downstream task in a (semi-)supervised manner.

In practice, however, supervised signals (e.g., labels) are scarce and are often laborious and expensive to obtain. Hence, in this work, we propose an unsupervised approach, called \otc, that produces a hierarchical abstraction of a graph independent of downstream tasks. Therein, node features for the graphs in the hierarchy are derived simultaneously, so that they can be used for different tasks through training separate downstream predictive models. \otc\ consists of two ingredients: a parameterized graph coarsening strategy in the algebraic multigrid (AMG) style; and an optimal transport that minimizes the structural transportation between two consecutive graphs in the hierarchy, thus replacing the cross-entropy or other losses that rely on labeling information. The ``OT'' part of the name comes from Optimal Transport. We show that this unsupervised approach produces meaningful coarse graphs that are structure preserving; and that the learned representations perform competitively with supervised approaches.

The contribution of this work is threefold. First, for unsupervised learning we introduce a new technique based on hierarchical abstraction through minimizing discrepancy along the hierarchy. Second, key to a successful hierarchical abstraction is the coarsening strategy. We develop one motivated by AMG and empirically show that the resulting coarse graphs qualitatively preserve the graph structure. Third, we demonstrate that the proposed technique, combining coarsening and unsupervised learning, performs comparably with supervised approaches but is advantageous in practice facing label scarcity.

\section{Related Work}\label{sec:related}
Hierarchical (a.k.a. multilevel or multiscale) methods are behind the solutions of a variety of problems, particularly for graphs. Therein, coarsening approaches are being constantly developed and applied. Two active areas are graph partitioning and clustering. The former is often used in parallel processing, circuit design, and solutions of linear systems. The latter appears in descriptive data analysis.

Many of the graph hierarchical approaches consist of a coarsening and an uncoarsening phase. The coarsening phase successively reduces the size of a given graph, so that an easy solution can be obtained for the smallest one. Then, the small solution is lifted back to the original graph through successive refinement in the reverse coarsening order. For coarsening, a class of approaches applies heave-edge matching heuristics~\citep{Hendrickson1995,Karypis1998,Dhillon2007}. Loukas and coauthors show that for certain graphs, the principal eigenvalues and eigenspaces of the coarsened and the original graph Laplacians are close under randomized matching~\citep{Loukas2018,Loukas2019}. \citet{BravoHermsdorff2019} show that contracting two nodes into one may be interpreted as perturbing the Laplacian pseudoinverse with an infinitely weighted edge. On the other hand, in the uncoarsening phase, refinement can be done in several ways, including Kernighan-Lin refinement~\citep{Kernighan1970,Shi2000,Luxburg2007} and kernel $k$-means~\citep{Dhillon2007}.

Another class of coarsening approaches selects a subset of nodes from the original graph. Call them coarse nodes; they form the node set of the coarse graph. Other nodes are aggregated with weights to the coarse nodes in certain ways, which, simultaneously define the edges in the coarse graph. Many of these methods were developed akin to algebraic multigrid (AMG)~\citep{Ruge1987}, wherein the coarse nodes, the aggregation rule, and edge weights may be defined based on original edge weights~\citep{Kushnir2006}, diffusion distances~\citep{Livne2012}, or algebraic distances~\citep{Ron2011,Chen2011,Safro2014}. In this work, the selection of the coarse nodes and the aggregation weights are parameterized and learned instead.

Hierarchical graph representation is emerging in deep learning. Representative approaches include \diffpool~\citep{Ying2018}, \gunet~\citep{Gao2019}, and~\sagpool~\citep{Lee2019}. Cast in the above setting, \diffpool\ is similar to the first class of coarsening approaches, whereas \gunet\ and \sagpool\ similar to the latter. All methods are supervised, as opposed to ours.

Our work is additionally drawn upon optimal transport, a tool recently used for defining similarity of graphs~\citep{Vayer2019,Xu2019a}. In the referenced work, Gromov--Wasserstein distances are developed that incorporate both node features and graph structures. Moreover, a transportation distance from the graph to its subgraph is developed by~\citet{Garg2019}. Our approach is based on a relatively simpler Wasserstein distance, whose calculation admits an iterative procedure more friendly to neural network parameterization.

\section{Method}
In this section, we present the proposed method \otc, beginning with two main ingredients: coarsening and optimal transport, followed by a summary of the computational steps in training and the use of the results for downstream tasks.

\subsection{AMG-Style Coarsening}
The first ingredient coarsens a graph $G$ into a smaller one $G_c$. For a differentiable parameterization, an operator will need be defined that transforms the corresponding graph adjacency matrix $A\in\real^{n\times n}$ into $A_c\in\real^{m\times m}$, where $n$ and $m$ are the number of nodes of $G$ and $G_c$ respectively, with $m<n$. We motivate the definition by algebraic multigrid~\citep{Ruge1987}, because of the hierarchical connection and a graph-theoretic interpretation. AMG also happened to be referenced as a potential candidate for pooling in some graph neural network architectures~\citep{Bruna2014,Defferrard2016}.

\subsubsection{Background on Algebraic Multigrid}
AMG belongs to the family of multigrid methods~\citep{Briggs2000} for solving large, sparse linear systems of the form $Ax=b$, where $A$ is the given sparse matrix, $b$ is the right-hand vector, and $x$ is the unknown vector to be solved for. For simplicity, we assume throughout that $A$ is symmetric. The simplest algorithm, two-grid V-cycle, consists of the following steps:
(i) Approximately solve the system with an inexpensive iterative method and obtain an approximate solution $x'$. Let $r=b-Ax'$ be the residual vector.
(ii) Find a tall matrix $S\in\real^{n\times m}$ and solve the smaller residual system $(S^TAS)y=S^Tr$ for the shorter unknown vector $y$.
(iii) Now we have a better approximate solution $x''=x'+Sy$ to the original system. Repeat the above steps until the residual is sufficiently small.

The matrix of the residual system, $S^TAS$, is called the Galerkin coarse-grid operator. One may show that step (ii), if solved exactly, minimizes the energy norm of the error $x-x''$ over all possible corrections from the range of the matrix $S$. Decades of efforts on AMG discover practical definitions of $S$ that both is economic to construct/apply and encourages fast convergence. We depart from these efforts and define/parameterize an $S$ that best suites graph representation learning.

\begin{figure}[t]
  \centering
  \subfigure[Graph $G$]{
    \includegraphics[width=0.28\linewidth]{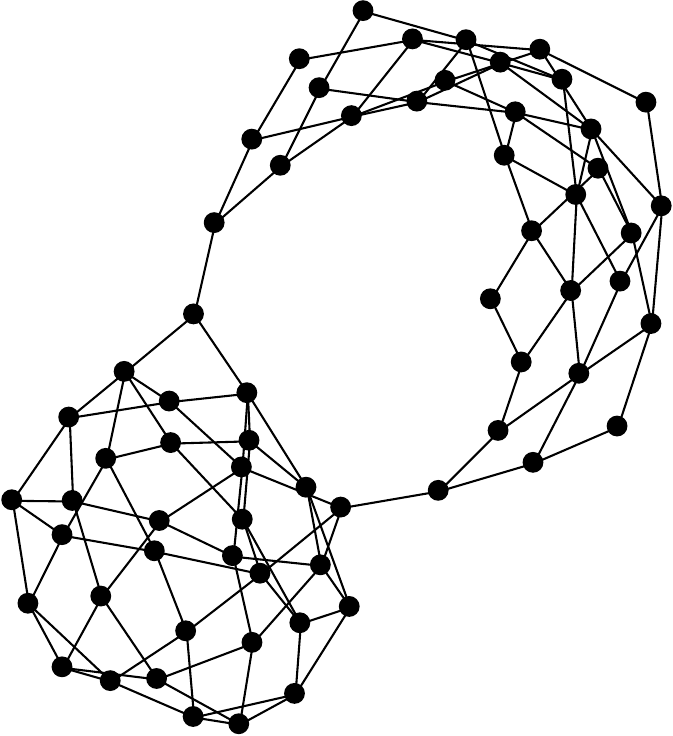}}\quad
  \subfigure[Coarse nodes (red) and fine nodes (blue)]{
    \includegraphics[width=0.28\linewidth]{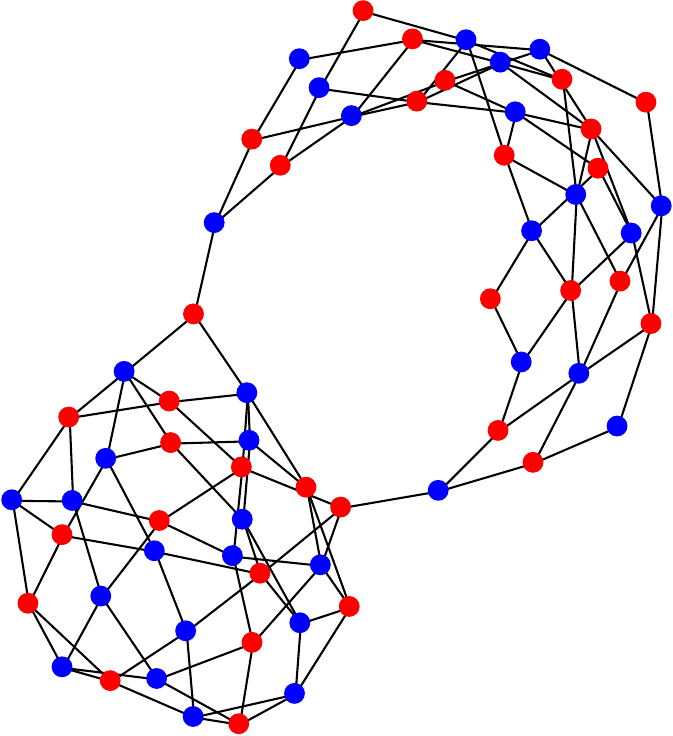}}\quad
  \subfigure[Coarse graph $G_c$ (using only half of the nodes in $G$)]{
    \includegraphics[width=0.28\linewidth]{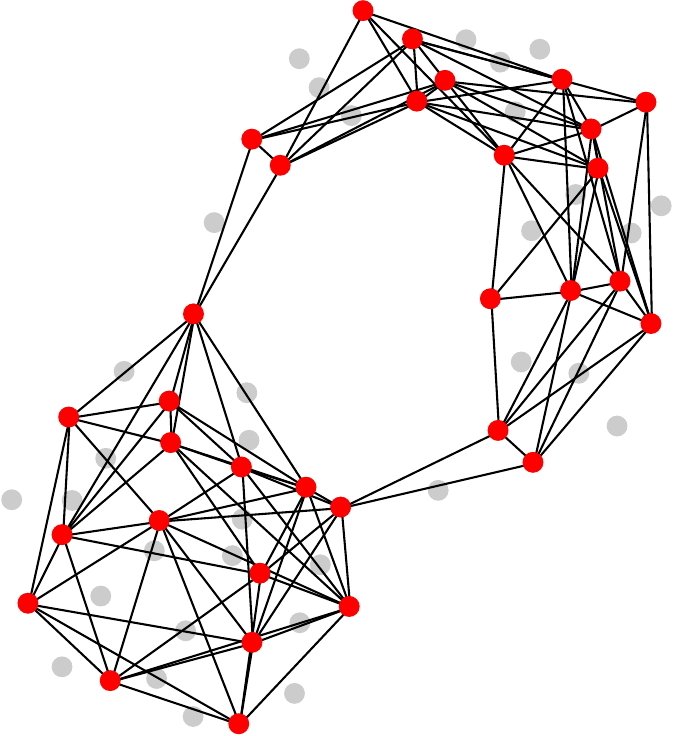}}
  \caption{Example graph and coarsening.}
  \label{fig:interp.aggreg}
\end{figure}

\subsubsection{Coarsening Framework}
Following the above motivation, we settle with the coarsening framework
\begin{equation}\label{eqn:Ac}
  A_c = S^T A S,
\end{equation}
where $S$ is named the \emph{coarsening matrix}. For parameterization, we might have treated $S$ as a parameter matrix, but it requires a fixed size to be learnable and hence it can only be applied to graphs of the same size. This restriction both is unnatural in practice and destroys permutation invariance of the nodes. In what follows, we discuss the properties of $S$ from a graph theoretic view, which leads to a natural parameterization.

\subsubsection{Properties of $S$}
Let $V$ be the node set of the graph $G$. AMG partitions $V$ into two disjoint subsets $C$ and $F$, whose elements are called \emph{coarse nodes} and \emph{fine nodes}, respectively. See Figure~\ref{fig:interp.aggreg}(b). For coarsening, $C$ becomes the node set of the coarse graph and the nodes in $F$ are eliminated.

The rows of the coarsening matrix $S$ correspond to the nodes in $V$ and columns to nodes in $C$. This notion is consistent with definition~\eqref{eqn:Ac}, because the rows and columns of $A_c$ correspond to the coarse nodes. It also distinguishes from \diffpool~\citep{Ying2018}, which although defines the next graph by the same equation~\eqref{eqn:Ac}, does not use the nodes in the original graph as those of the smaller graph.

If $S$ is dense, so is $A_c$. Then, the graphs in the coarsening hierarchy are all complete graphs, which are less desirable. Hence, we would like $S$ to be sparse. Assuming so, one sees that each column of $S$ plays the role of aggregation. For convenience, we define $\chi(j)$ to be the set of nonzero locations of this column and call it the \emph{aggregation set} of the coarse node $j$. The following result characterizes the existence of an edge in the coarse graph.

\begin{theorem}\label{thm:sparse}
  There is an edge connecting two nodes $j$ and $j'$ in the coarse graph if and only if there is an edge connecting the two aggregation sets $\chi(j)$ and $\chi(j')$ in the original graph.
\end{theorem}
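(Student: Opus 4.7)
The plan is to prove the statement by simply expanding the matrix product $A_c = S^T A S$ entry-wise and identifying when the $(j,j')$ entry is nonzero. Specifically, I would write
\[
(A_c)_{jj'} = \sum_{i=1}^{n}\sum_{i'=1}^{n} S_{ij}\, A_{ii'}\, S_{i'j'},
\]
and observe that the only terms that can contribute to this double sum are those with $i \in \chi(j)$ and $i' \in \chi(j')$, by the very definition of the aggregation sets as the nonzero supports of the columns of $S$. So $(A_c)_{jj'}$ is nonzero precisely when at least one pair $(i,i') \in \chi(j)\times\chi(j')$ has $A_{ii'}\neq 0$, which by definition says that there is an edge between $\chi(j)$ and $\chi(j')$ in $G$.

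The two directions of the biconditional then fall out quickly. For the ``if'' direction, given an edge $(i,i')$ with $i\in\chi(j)$ and $i'\in\chi(j')$, the corresponding term $S_{ij}A_{ii'}S_{i'j'}$ is nonzero; to conclude that the full sum is nonzero, I would invoke the fact that the adjacency entries $A_{ii'}$ and the aggregation weights $S_{ij}$ are all nonnegative (edge weights and aggregation coefficients are nonnegative by construction in the AMG-style coarsening), so no cancellation is possible and $(A_c)_{jj'}>0$. For the ``only if'' direction, contrapositively, if there is no edge between $\chi(j)$ and $\chi(j')$ in $G$, then $A_{ii'}=0$ for every $i\in\chi(j)$, $i'\in\chi(j')$, so every term in the double sum vanishes and $(A_c)_{jj'}=0$.

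The only substantive subtlety, and hence the ``hard part,'' is the non-cancellation argument in the ``if'' direction: without nonnegativity, it is logically possible that several nonzero contributions sum to zero. I would therefore state the nonnegativity of $S$ and $A$ explicitly (either as a standing assumption coming from the AMG motivation, where $S$ encodes aggregation weights, or as a property enforced by the parameterization of $S$ introduced later in the paper) and use it to justify that any single nonzero summand forces $(A_c)_{jj'}\neq 0$. Apart from this, the proof is a one-line matrix computation together with the definition of $\chi(\cdot)$ as the support of the corresponding column of $S$.
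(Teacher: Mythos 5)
Your proof is correct and follows the same core computation as the paper: expand $(A_c)_{jj'}=\sum_{i,i'}S_{ij}A_{ii'}S_{i'j'}$ and observe that only pairs $(i,i')\in\chi(j)\times\chi(j')$ can contribute. The one place where you diverge is exactly the subtlety you flag, namely cancellation in the ``if'' direction. The paper does not invoke nonnegativity at all; instead it declares the coarse edge set to be determined by \emph{structural} nonzeros (a sum is structurally nonzero if any summand is nonzero, even if the algebraic sum vanishes), which is the standard convention for sparsity patterns of matrix products and makes the theorem true for arbitrary real-valued $S$ and $A$ by fiat. Your route instead proves genuine algebraic nonvanishing by appealing to nonnegativity of $A$ and $S$; this is a strictly stronger conclusion in the paper's actual setting, and the hypothesis is indeed satisfied there, since $S$ is later defined as an $\ell_1$-row-normalization of $\widehat{A}_s\odot(\ones\alpha_s^T)$ with both factors nonnegative (the paper itself uses this nonnegativity to argue total edge weight preservation), and $A$ is a (normalized) adjacency matrix. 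The trade-off: the paper's convention keeps the theorem independent of how $S$ is parameterized, at the cost of a slightly artificial notion of ``edge''; your version ties the theorem to the nonnegativity of the construction but delivers an honest statement about the numerical entries of $A_c$. Either is acceptable; you should just make the nonnegativity assumption explicit in the theorem statement or note, as you propose, since the theorem as stated in the paper precedes the parameterization of $S$.
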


\begin{proof}
  We say that the sum of two numbers is \emph{structurally nonzero} if at least one of the numbers is nonzero, even if they sum algebraically to zero (e.g., when one number is the opposite number of the other). Structural nonzero of an element in the adjacency matrix is the necessary and sufficient condition for the existence of the corresponding edge in the graph.

  Recall that $A_c=S^TAS$. For two coarse nodes $j$ and $j'$, one sees that the element $A_c(j,j')$ is structurally nonzero if and only if the submatrix $A(\chi(j),\chi(j'))$ is nonempty. In other words, $j$ and $j'$ are connected by an edge in the coarse graph $G_c$ if and only if there exists an edge connecting $\chi(j)$ and $\chi(j')$ in the original graph $G$. Note that such an edge may be a self loop.
\end{proof}

Hence, in order to encourage sparsity of the coarse graph, many of the aggregation set pairs should not be connected by an edge. One principled approach to ensuring so, is to restrict the aggregation set to contain at most direct neighbors and the node itself. The following corollary is straightforward. We say that the \emph{distance} of two nodes is the number of edges in the shortest path connecting them.

\begin{corollary}\label{cor:sparse}
  If each aggregation set contains at most direct neighbors and the node itself, then there is an edge connecting two nodes in the coarse graph only if the distance between them in the original graph is at most 3.
\end{corollary}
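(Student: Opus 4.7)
The plan is to invoke Theorem~\ref{thm:sparse} directly and then argue by the triangle inequality for graph distance. Concretely, suppose that in the coarse graph $G_c$ there is an edge between two coarse nodes $j$ and $j'$. By Theorem~\ref{thm:sparse}, this is equivalent to the existence of some $u\in\chi(j)$ and $u'\in\chi(j')$ such that $(u,u')$ is an edge in the original graph $G$, i.e.\ the shortest-path distance in $G$ satisfies $d_G(u,u')\le 1$.

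Next I would use the hypothesis that each aggregation set consists only of the coarse node and (some of) its direct neighbors. Since $u\in\chi(j)$, either $u=j$ or $u$ is a direct neighbor of $j$, so $d_G(j,u)\le 1$. Likewise $d_G(u',j')\le 1$. Applying the triangle inequality to the metric $d_G$ on $V$ gives
\begin{equation*}
  d_G(j,j') \;\le\; d_G(j,u) + d_G(u,u') + d_G(u',j') \;\le\; 1+1+1 \;=\; 3,
\end{equation*}
which is the desired conclusion.

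There is essentially no obstacle here: the whole content is already packaged in Theorem~\ref{thm:sparse}, and the corollary is the immediate consequence of bounding the detour through the two aggregation sets. The only point that deserves a line of care is noting that the argument is an ``only if'' statement (one direction of implication), so no converse needs to be established; conversely, distance at most $3$ does not force a coarse edge, because whether an edge actually appears also depends on which neighbors are selected into each $\chi(j)$.
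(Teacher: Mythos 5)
Your proposal is correct and follows essentially the same route as the paper's own proof: apply Theorem~\ref{thm:sparse} to obtain an edge between some $u\in\chi(j)$ and $u'\in\chi(j')$, then bound $d_G(j,j')$ by the length of the walk $j,u,u',j'$ (the paper phrases this as exhibiting the path rather than invoking the triangle inequality, but the content is identical). Your closing remark that the converse need not hold is a sensible observation consistent with the corollary's one-directional statement.
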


\begin{proof}
  If there is an edge connecting $j$ and $j'$ in the coarse graph, then according to Theorem 1, there is an edge connecting $i\in\chi(j)$ and $i'\in\chi(j')$ in the original graph, for some nodes $i$ and $i'$. Then by the assumption that the elements of $\chi(j)$ are either $j$ or $j$'s direct neighbors and similarly for $\chi(j')$, we know that $j$ and $j'$ are connected by the path $\{j,i,i',j'\}$, which means that the distance between $j$ and $j'$ is at most 3.
\end{proof}

Consequently, in what follows we will let $S$ have the same sparsity structure as the corresponding part of $A+I$. The identity matrix is used to introduce self loops. An illustration of the resulting coarse graph is given in Figure~\ref{fig:interp.aggreg}(c), with self loops omitted.

\subsubsection{Parameterization of $S$}
With the graph-theoretic interpretation of $S$, we now parameterize it. The strategy consists of the following computational steps. First, select coarse nodes in a differentiable manner, so that the sparsity structure of $S$ is determined. Then, compute the nonzero elements of $S$.

The selection of coarse nodes may be done in several ways, such as the top-k approach that orders nodes by projecting their feature vectors along a learnable direction (see, e.g., \citet{Cangea2018,Gao2019}). This approach, however, leverages only node features but not the graph information. To leverage both, we apply one graph convolution
\begin{equation}\label{eqn:alpha}
\alpha=\sigmoid(\widehat{A}XW_{\alpha})
\end{equation}
to compute a vector $\alpha\in\real^{n\times 1}$ that weighs all nodes~\citep{Lee2019}. Here, $\widehat{A}\in\real^{n\times n}$ is the normalized graph adjacency matrix defined in graph convolutional networks~\citep{Kipf2017}, $X\in\real^{n\times d}$ is the node feature matrix, and $W_{\alpha}\in\real^{d\times 1}$ is a parameter vector. The weighting necessitates using sigmoid (or other invertible functions) rather than ReLU as the activation function.

For a coarsening into $m$ nodes, we pick the top $m$ values of $\alpha$ and list them in the sorted order. Denote by $\alpha_s\in\real^{m\times 1}$ such a vector, where the subscript $s$ means sorted and picked. We similarly denote by $\widehat{A}_s\in\real^{n\times m}$ the column-sorted and picked version of $\widehat{A}$.

We let $S$ be an overlay of the graph adjacency matrix with the node weights $\alpha_s$. Specifically, define
\begin{equation}\label{eqn:S}
S=\ell_1\text{-row-normalize}[\widehat{A}_s \odot (\ones \alpha_s^T)],
\end{equation}
where $\ones$ means a column vector of all ones.

There are several reasons why $S$ is so defined. First, $S$ carries the nonzero structure of $\widehat{A}_s$, which, following Corollary~\ref{cor:sparse}, renders more likely a sparse coarse graph. Second, the use of the normalized adjacency matrix introduces self loops, which ensure that an edge in the coarse graph exists if the distance is no more than three, rather than exactly three (which is too restrictive). Third, because both $\widehat{A}_s$ and $\alpha_s$ are nonnegative, the row normalization ensures that the total edge weight of the graph is preserved after coarsening. To see this, note that $\ones^TA_c\ones=\ones^TS^TAS\ones=\ones^TA\ones$.

\subsection{Optimal Transport}
The second ingredient of the proposed \otc\ uses optimal transport for unsupervised learning. Optimal transport~\citep{Peyre2019} is a framework that defines the distance of two probability measures through optimizing over all possible joint distributions of them. If the two measures lie on the same metric space and if the infinitesimal mass transportation cost is a distance metric, then optimal transport is the same as the Wasserstein-1 distance. In our setting, we extend this framework for defining the distance of the original graph $G$ and its coarsened version $G_c$. Then, the distance constitutes the coarsening loss, from which model parameters are learned in an unsupervised manner.

\subsubsection{Optimal Transport Distance}
To extend the definition of optimal transport of two probability measures to that of two graphs, we treat the node features from each graph as atoms of an empirical measure. The coarse node features result from graph neural network mappings, carrying information of both the initial node features and the graph structure. Hence, the empirical measure based on node features characterizes the graph and leads to a natural definition of graph distance.

Specifically, let $M$ be a matrix whose element $M_{ij}$ denotes the transport cost from a node $i$ in $G$ to a node $j$ in $G_c$. We define the distance of two graphs as
\begin{equation}\label{eqn:W.dist}
W_{\gamma}(G,G_c) := \min_{P\in U(a,b)} \langle P,M \rangle - \gamma E(P),
\end{equation}
where $P$, a matrix of the same size as $M$, denotes the joint probability distribution constrained to the space $U(a,b):=\{P\in\real_+^{n\times m} \mid P\ones=a, \,\, P^T\ones=b\}$ characterized by marginals $a$ and $b$; $E$ is the entropic regularization~\citep{Wilson1969}
\[
E(P):=-\sum_{i,j}P_{ij}(\log P_{ij}-1);
\]
and $\gamma>0$ is the regularization magnitude.

Through a simple argument of Lagrange multipliers, it is known that the optimal $P_{\gamma}$ that solves~\eqref{eqn:W.dist} exists and is unique, in the form
$
P_{\gamma} = \diag(u) K \diag(v),
$
where $u$ and $v$ are certain positive vectors of matching dimensions and $K=\exp(-M/\gamma)$ with the exponential being element-wise. The solution $P_{\gamma}$ may be computationally obtained by using Sinkhorn's algorithm~\citep{Sinkhorn1964}: Starting with any positive vector $v^0$, iterate
\begin{multline}\label{eqn:iter}
\text{for $i=0,1,2,\ldots$ until convergence, } \\
u^{i+1} = a \oslash (Kv^i) \text{ and }
v^{i+1} = b \oslash (K^Tu^{i+1}).
\end{multline}
Because the solution $P_{\gamma}$ is part of the loss function to be optimized, we cannot iterate indefinitely. Hence, we instead define a computational solution $P_{\gamma}^k$ by iterating only a finite number $k$ times:
\begin{equation}\label{eqn:Pk}
P_{\gamma}^k := \diag(u^k) K \diag(v^k).
\end{equation}
Accordingly, we arrive at the $k$-step optimal transport distance
\begin{equation}\label{eqn:W.dist.comp}
W_{\gamma}^k(G,G_c) := \langle P_{\gamma}^k,M \rangle - \gamma E(P_{\gamma}^k).
\end{equation}
The distance~\eqref{eqn:W.dist.comp} is the sample loss for training.

\subsubsection{Parameterization of $M$}
With the distance defined, it remains to specify the transport cost matrix $M$. As discussed earlier, we model $M_{ij}$ as the distance between the feature vector of node $i$ from $G$ and that of $j$ from $G_c$. This approach on the one hand is consistent with the Wasserstein distance and on the other hand, carries both node feature and graph structure information.

Denote by $\gnn(A,X)$ a generic graph neural network architecture that takes the graph adjacency matrix $A$ and node feature matrix $X$ as input and produces as output a transformed feature matrix. We produce the feature matrix $X_c$ of the coarse graph through the following encoder-decoder-like architecture:
\begin{equation}\label{eqn:Xc}
Z = \gnn(A,X), \,\,
Z_c = S^T Z, \,\,
X_c = \gnn(A_c,Z_c).
\end{equation}
The encoder produces an embedding matrix $Z_c$ of the coarse graph through a combination of GNN transformation and aggregation $S^T$, whereas the decoder maps $Z_c$ to the original feature space so that the resulting $X_c$ lies in the same metric space as $X$. Then, the transport cost, or the metric distance, $M_{ij}$ is the $p$-th power of the Euclidean distance of the two feature vectors:
\begin{equation}\label{eqn:M}
M_{ij}=\|X(i,:) - X_c(j,:)\|_2^p.
\end{equation}
In this case, the optimal transport distance is the $p$-th root of the Wasserstein-$p$ distance. The power $p$ is normally set as one or two.

\subsection{Training and Downstream Use}
With the technical ingredients developed in the preceding subsections, we summarize the computational steps into Algorithm~\ref{algo:forward}, which is self explanatory.

\begin{algorithm}[ht]
  \caption{Unsupervised training: forward pass}
  \label{algo:forward}
\begin{algorithmic}[1]
  \For{each coarsening level}
  \State Compute coarsening matrix $S$ by~\eqref{eqn:alpha} and~\eqref{eqn:S}
  \State Obtain $A_c$ and $X_c$ by~\eqref{eqn:Ac} and~\eqref{eqn:Xc}
  \State Obtain also node embeddings $Z_c$ from~\eqref{eqn:Xc}
  \State Compute transport cost matrix $M$ by~\eqref{eqn:M}
  \State Compute $k$-step joint probability $P_{\gamma}^k$ by~\eqref{eqn:iter} and~\eqref{eqn:Pk}
  \State Compute current-level loss $W_{\gamma}^k(G,G_c)$ by~\eqref{eqn:W.dist.comp}
  \State Set $G\gets G_c$, $A\gets A_c$, and $X\gets X_c$
  \EndFor
  \State Sum the loss for all coarsening levels as the sample loss
\end{algorithmic}
\end{algorithm}

After training, for each graph $G$ we obtain a coarsening sequence and the corresponding node embedding matrices $Z_c$ for each graph in the sequence. These node embeddings may be used for a downstream task. Take graph classification as an example. For each node embedding matrix, we perform a global pooling (e.g., a concatenation of max pooling and mean pooling) across the nodes and obtain a summary vector. We then concatenate the summary vectors for all coarsening levels to form the feature vector of the graph. An MLP is then built to predict the graph label.

\section{Experiments}
In this section, we conduct a comprehensive set of experiments to evaluate the performance of the proposed method \otc. Through experimentation, we aim at answering the following questions. (i) As an unsupervised hierarchical method, how well does it perform on a downstream task, compared with supervised approaches and unsupervised non-hierarchical approaches? (ii) In a multi-task setting, how well does it perform compared with supervised models trained separately for each task? (iii) Do the coarse graphs carry the structural information of the original graphs (i.e., are they meaningful)?

\subsection{Setup}
We perform experiments with the following data sets: \proteins, \mutag, \ncii, \imdbbi\ (\imdbb\ for short), \imdbmu\ (\imdbm\ for short), and \dd. They are popularly used benchmarks publicly available from~\citet{Kersting2016}. Except \imdbb\ and \imdbm\ which are derived from social networks, the rest of the data sets all come from the bioinformatics domain. Information of the data sets is summarized in Table~\ref{tab:dataset}.

\begin{table}[ht]
  \centering
  \caption{Data sets.}
  \label{tab:dataset}
  \begin{tabular}{lccccccccc}
    \hline
    & \proteins & \mutag & \ncii \\
    \hline
    \# Graphs        & 1,113  & 188   & 4,127 \\
    \# Classes       & 2      & 2     & 2     \\
    Ave. \# nodes    & 39.06  & 17.93 & 29.68 \\
    Ave. node degree & 3.73   & 2.21  & 2.17  \\
    \hline
    \hline
    & \imdbb & \imdbm & \dd \\
    \hline
    \# Graphs        & 1,000 & 1,500 & 1,178  \\
    \# Classes       & 2     & 3     & 2      \\
    Ave. \# nodes    & 19.77 & 13.00 & 284.32 \\
    Ave. node degree & 9.76  & 10.14 & 5.03   \\
    \hline
  \end{tabular}
\end{table}

We gauge the performance of \otc\ with several supervised approaches. They include the plain \gcn~\citep{Kipf2017} followed by a gloabl mean pooling, as well as five more sophisticated pooling methods: 
\sortpool~\citep{Zhang2018}, which retains the top-k nodes for fixed-size convolution;
\diffpool~\citep{Ying2018}, which applies soft clustering;
\setset~\citep{Vinyals2015}, which is used together with \graphsage~\citep{Hamilton2017} as a pooling baseline in~\citet{Ying2018};
\gpool~\citep{Cangea2018,Gao2019}, which retains the top-k nodes for graph coarsening, as is used by \gunet; and
\sagpool~\citep{Lee2019}, which applies self-attention to compute the top-k nodes. Among them, \diffpool, \gpool, and \sagpool\ are hierarchical methods, similar to ours. In addition, we also employ an ablation model, i.e., a supervised version of our coarsening model without using optimal transport distance as the loss function. This model is called \otcs, where we remove the unsupervised learning phase and directly train the coarsening model by using the prediction loss on training data.

Additionally, we take a simple unsupervised baseline. Named \gae, this baseline is a graph autoencoder that does not perform coarsening, but rather, applies GCN twice to respectively encode the node features and decode for reconstruction. The encoder serves the same purpose as that of the plain GCN and the decoder is needed for training without supervised signals. 

\subsection{Experimentation Details}
We evaluate all methods using 10-fold cross validation. For training, we use the Adam optimizer with a tuned initial learning rate and a fixed decay rate $0.5$ for every 50 epochs. We perform unsupervised training for a maximum of 200 epochs and choose the model at the best validation loss. Afterward, we feed the learned representations into a 2-layer MLP and evaluate the graph classification performance.

The weighting vector $\alpha$ (cf. Equation (2)) used for coarse node selection is computed by using 1-layer GCN with activation function $\sigmoid\circ\operatorname{square}$. That is, $\alpha=\sigmoid((\widehat{A}XW_{\alpha})^2)$. The GNNs in Equation (8) for computing the coarse node embeddings $Z_c$ and coarse node features $X_c$ are 1-layer GCNs. The power $p$ in Wasserstein-$p$ (cf. Equation (9)) is fixed as $2$. We use grid search to tune hyperparameters: the learning rate is from $\{0.01, 0.001\}$; and the number of coarsening levels is from $\{1,2,3\}$ for the propoed method and $\{2,3,4\}$ for the compared methods. The coarsening ratio is set to $0.5$ for all methods.

We implement the proposed method and the graph autoencoder by using the PyTorch Geometric library, which is shipped with off-the-shelf implementation of all other compared methods.

The code is available at \url{https://github.com/matenure/OTCoarsening}.

\subsection{Graph Classification}

\begin{table*}[ht]
  \centering
  \caption{Graph classification accuracy (in percentage).}
  \label{tab:results}
  \setlength{\tabcolsep}{0.5em}
  \begin{tabular}{ccccccccc}
    \hline
    Method & \proteins & \mutag & \ncii & \imdbb & \imdbm & \dd \\
    \hline
    \gcn      & 72.3\tiny$\pm$3.1 & 73.4\tiny$\pm$5.2 & 69.6\tiny$\pm$1.3 & 71.3\tiny$\pm$5.2 & 50.5\tiny$\pm$2.4 & 71.8\tiny$\pm$4.1\\
    \setset   & 73.4\tiny$\pm$3.7 & 74.6\tiny$\pm$5.3 & 70.3\tiny$\pm$1.6 & 72.9\tiny$\pm$4.7 & 49.7\tiny$\pm$3.5 & 70.8\tiny$\pm$3.9\\
    \sortpool & 73.5\tiny$\pm$4.5 & 80.1\tiny$\pm$6.7 & 69.1\tiny$\pm$4.5 & 71.6\tiny$\pm$3.6 & 49.9\tiny$\pm$2.1 & 73.7\tiny$\pm$7.7\\
    \diffpool & 74.2\tiny$\pm$3.2 & 84.5\tiny$\pm$7.3 & 71.7\tiny$\pm$2.8 & 74.3\tiny$\pm$3.5 & 50.3\tiny$\pm$2.8 & 73.9\tiny$\pm$3.5\\
    \gpool    & 72.2\tiny$\pm$3.1 & 76.2\tiny$\pm$9.0 & 72.4\tiny$\pm$2.2 & 73.0\tiny$\pm$5.5 & 49.5\tiny$\pm$3.2 & 71.5\tiny$\pm$4.7\\
    \sagpool  & 73.3\tiny$\pm$3.1 & 78.6\tiny$\pm$6.4 & {\bf73.1\tiny$\pm$2.4} & 72.2\tiny$\pm$4.7 & 50.4\tiny$\pm$2.1 & 72.0\tiny$\pm$4.2\\
    \gae      & 74.3\tiny$\pm$3.6 & 84.6\tiny$\pm$8.0 & 66.4\tiny$\pm$4.6 & 72.4\tiny$\pm$5.9 & 49.9\tiny$\pm$2.9 & 76.5\tiny$\pm$2.8\\
    \otcs     & 73.6\tiny$\pm$3.0 & 84.4\tiny$\pm$6.8 & 68.6\tiny$\pm$1.8 & 73.6\tiny$\pm$4.7 & 50.2\tiny$\pm$3.9 & 74.2\tiny$\pm$3.3 \\
    \pmb{\otc}      & {\bf74.9\tiny$\pm$3.9} & {\bf85.6\tiny$\pm$6.2} & 68.5\tiny$\pm$5.2 & {\bf74.6\tiny$\pm$4.9} & {\bf50.9\tiny$\pm$3.3} & {\bf77.2\tiny$\pm$3.1}\\
    \hline
  \end{tabular}
\end{table*}

Graph classification accuracies are reported in Table~\ref{tab:results}. \otc\ outperforms the compared methods in five out of six data sets. Moreover, it improves significantly the accuracy on \dd\ over all supervised baselines. Interestingly, the supervised runner up is almost always \diffpool, outperforming the subsequently proposed \gpool\ and \sagpool. On the other hand, these two methods perform the best on the other data set \ncii, with \sagpool\ taking the first place. On this data set, \otc\ performs on par with the lower end of the compared methods. It appears low-performing, possibly because of the lack of useful node features that play an important role in the optimal transport distance. Our ablation model, \otcs, is comparable to the best performance among all supervised baselines on most datasets, but performs worse than the final unsupervised model \otc\ using optimal transport. 

Based on these observations, we conclude that hierarchical methods indeed are promising for handling graph structured data. Moreover, as an unsupervised method, the proposed \otc\ performs competitively with strong supervised approaches. In fact, even for the simple unsupervised baseline \gae, it outperforms \diffpool\ on \proteins, \mutag, and \dd. This observation indicates that unsupervised approaches are quite competitive, paving the way for possible uses in other tasks.

\subsection{Sensitivity Analysis}
\otc\ introduces parameters owing to the computational nature of optimal transport: (a) the entropic regularization strength $\gamma$; and (b) the number of Sinkhorn steps, $k$. In Figure~\ref{fig:sensitivity}, we perform a sensitivity analysis and investigate the change of classification accuracy as these parameters vary. One sees that most of the curves are relatively flat, except the case of $\gamma$ on \ncii. This observation indicates that the proposed method is relatively robust to the parameters of optimal transport. The curious case of \ncii\ inherits the weak performance priorly observed, possibly caused by the lack of informative input features. 

\begin{figure}[ht]
\centering
\subfigure[Entropic regularization, $\gamma$]{\includegraphics[width=.45\linewidth]{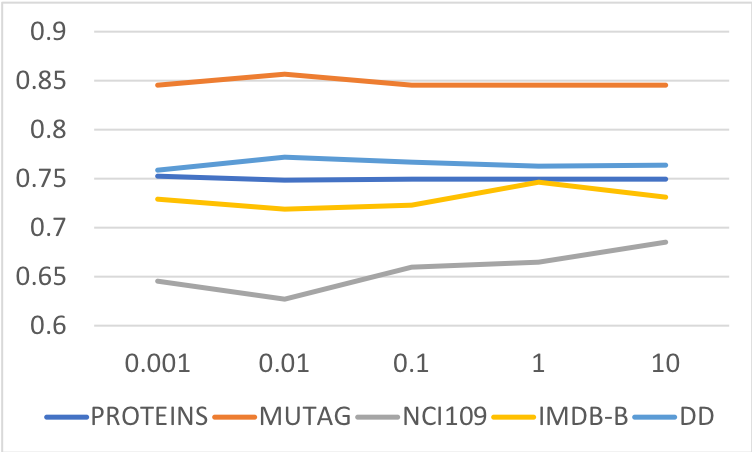}}
\hspace{.5cm}
\subfigure[Sinkhorn steps, $k$]{\includegraphics[width=.45\linewidth]{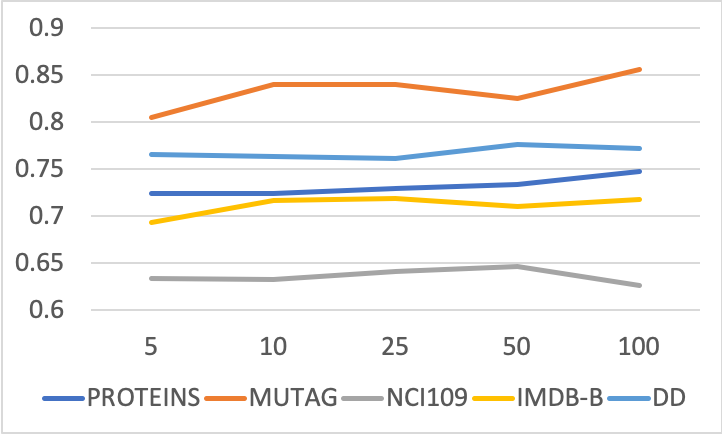}}
\caption{Classification accuracy as parameters vary.}
\label{fig:sensitivity}
\end{figure}

The observation that performance is insensitive to the parameters does not contradict the computational foundation of optimal transport. In the standard use, a transport cost $M$ is given and the optimal plan $P$ is computed accordingly. Hence, $P$ varies with $\gamma$ and $k$. In our case, on the other hand, $M$ is not given. Rather, it is parameterized and the parameterization carries over to the computational solution of $P$. Thus, it is not impossible that the parameterization finds an optimum that renders the loss (Equation (7)) insensitive to $\gamma$ and $k$. In other words, the optimization of Equation (7), with respect to neither $M$ nor $P$ but the parameters therein, turns out to be fairly stable.

\begin{figure*}[ht]
  \centering
    \includegraphics[width=0.47\linewidth]{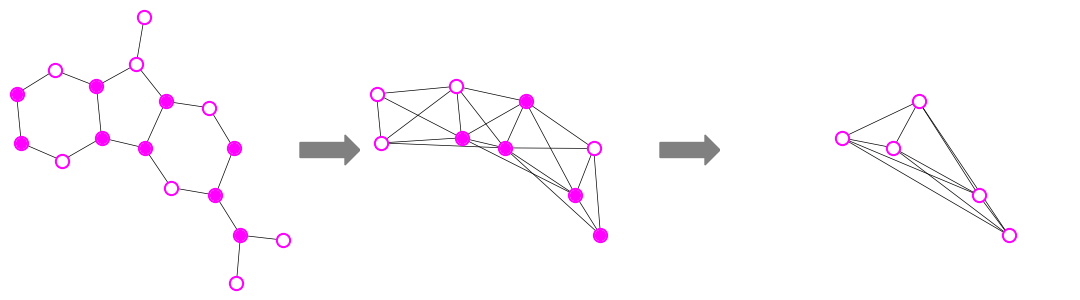}\hspace{0.5cm}
    \includegraphics[width=0.47\linewidth]{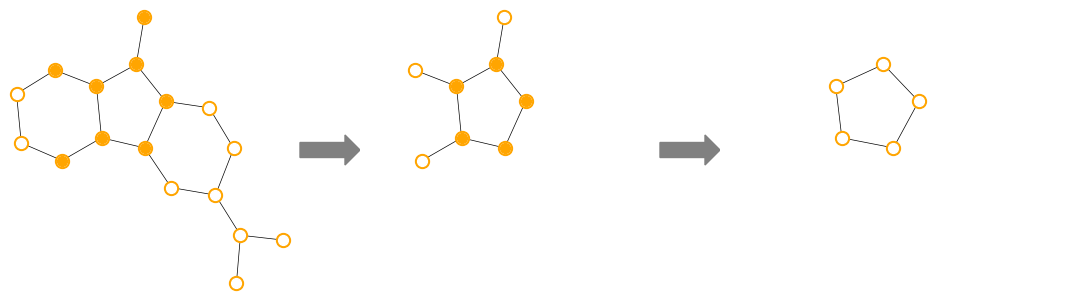}\\
    \includegraphics[width=0.47\linewidth]{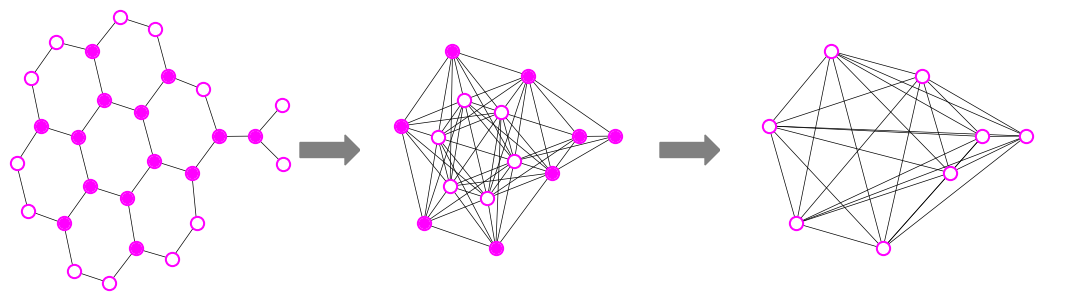}\hspace{0.5cm}
    \includegraphics[width=0.47\linewidth]{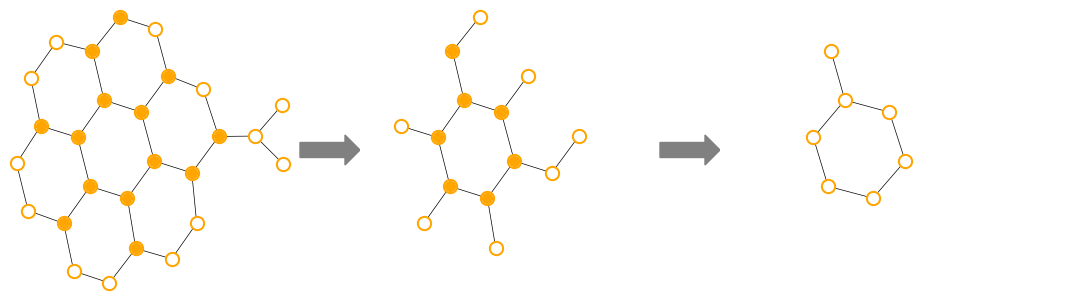}\\
    \includegraphics[width=0.47\linewidth]{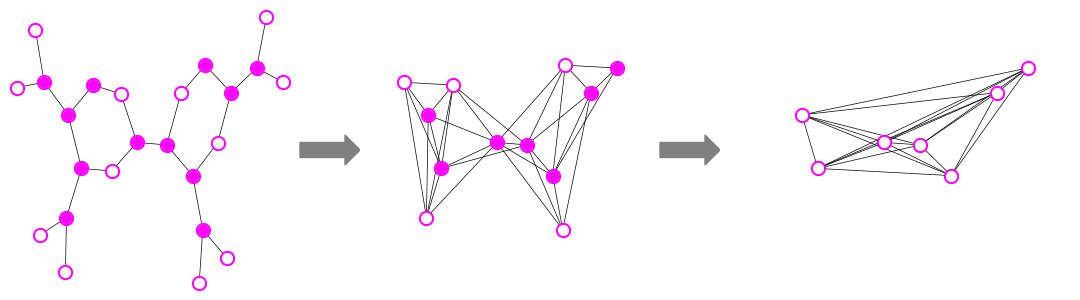}\hspace{0.5cm}
    \includegraphics[width=0.47\linewidth]{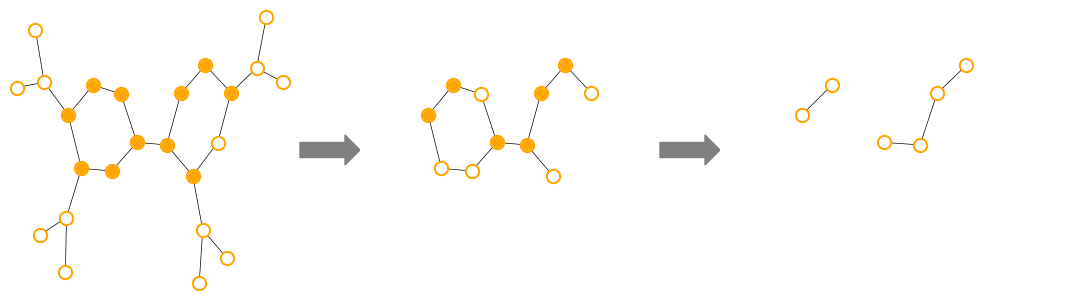}
  \caption{Coarsening sequence for graphs from \mutag. Left (magenta): \otc. Right (orange): \sagpool. Hollow nodes are coarse nodes.}
  \label{fig:coarsening.sequence}
\end{figure*}

\subsection{Multi-Task Learning}
We further investigate the value of unsupervised graph representation through the lens of multi-task learning. We compare three scenarios: (A) a single representation trained without knowledge of the downstream tasks (method: \otc, \gae); (B) a single representation trained jointly with all downstream tasks (methods: \gcn, \setset, \sortpool, \diffpool, \gpool, and \sagpool, all suffixed with ``-joint''); and (C) different representations trained separately with each task (method: \diffpool-sep).

The data set is QM7b~\citep{Wu2018}, which consists of 14 regression targets. Following~\citet{Gilmer2017}, we standardize each target to mean 0 and standard deviation 1; we also use MSE as the training loss but test with MAE. Table~\ref{tab:multitask} reports the MAE and timing results.

\begin{table}[ht]
\centering
\caption{Multi-task regression error and training time (in seconds).}
\label{tab:multitask}
\begin{tabular}{cccc}
\hline
& Method & MAE & Time \\
\hline
(A) & \otc            & 0.6625 & 1296  \\
(A) & \gae            & 0.6749 &  587  \\
(B) & \gcn-joint      & 2.4225 & 2122  \\
(B) & \setset-joint   & 2.4256 & 2657  \\
(B) & \sortpool-joint & 2.4408 & 2652  \\
(B) & \diffpool-joint & 2.4231 & 1100  \\
(B) & \gpool-joint    & 2.4200 & 2117  \\
(B) & \sagpool-joint  & 2.4221 & 1874  \\
(C) & \diffpool-sep   & 0.1714 & 15520 \\
\hline
\end{tabular}
\end{table}

One sees from Table~\ref{tab:multitask} that in terms of regression error, single unsupervised representation (A) significantly outperforms single supervised representations (B), whilist being inferior to separate supervised representations (C). Separate representations outperform single representations at the cost of longer training time, because they need to train 14 separate models whereas others only one. The timings for (B) are comparable with that of (A). The timing variation is caused by several factors, including the architecture difference and dense-versus-sparse implementation. \diffpool\ is implemented with dense matrices, which may be faster compared with other methods that treat the graph adjacency matrix sparse, when the graphs are small.

\subsection{Qualitative Study}
As discussed in the related work section, coarsening approaches may be categorized in two classes: clustering based and node-selection based. Methods in the former class (e.g., \diffpool) coarsen a graph through clustering similar nodes. In graph representation learning, similarity of nodes is measured by not only their graph distance but also the closeness of their feature vectors. Hence, two distant nodes bear a risk of being clustered together if their input features are similar.

On the other hand, methods in the latter class (e.g., \gunet\ and \sagpool) use nodes in the original graph as coarse nodes. If the coarse nodes are connected based on only their graph distance but not feature vectors, the graph structure is more likely to be preserved. Such is the case for \otc, where only nodes within a 3-hop neighborhood are connected. Such is also the case for \gunet\ and \sagpool, where the neighborhood is even more restricted (e.g., only 1-hop neighborhood). However, if two coarse nodes are connected only when there is an edge in the original graph, these approaches bear another risk of resulting in disconnected coarse graphs.

Theoretical analysis is beyond scope. Hence, we conduct a qualitative study and visually inspect the coarsening results. In Figure~\ref{fig:coarsening.sequence}, we show a few graphs from the data set \mutag, placing the coarsening sequence of \otc\ on the left and that of \sagpool\ on the right for comparison. The solid nodes are selected as coarse nodes.

For the graph on the top row, \otc\ selects nodes across the consecutive rings in the first-level coarsening, whereas \sagpool\ selects the ring in the middle. For the graph in the middle row, both \otc\ and \sagpool\ select the periphery of the honeycomb for the first-level coarsening, but differ in the second level in that one selects again the periphery but the other selects the heart. For the graph at the bottom row, \otc\ preserves the butterfly topology through coarsening but the result of \sagpool\ is hard to comprehend.

\section{Conclusion}
Coarsening is a common approach for solving large-scale graph problems in various scientific disciplines. How one effectively selects coarse nodes and aggregates neighbors motivates the present work. Whereas a plethora of coarsening methods were proposed in the past and are used today, these methods either do not have a learning component, or have parameters that need be learned with a downstream task. In this work, we present \otc, which is an unsupervised approach. It follows the concepts of AMG but learns the selection of the coarse nodes and the coarsening matrix through the use of optimal transport. We demonstrate its successful use in graph classification and regression tasks and show that the coarse graphs preserve the structure of the original one. We envision that the proposed idea may be adopted in many other graph learning scenarios and downstream tasks.

\section*{Acknowledgments}
This work is supported in part by DOE Award DE-OE0000910.

\bibliography{reference}


\end{document}